\theoremstyle{plain}
\newtheorem{theorem}{Theorem}[section]
\theoremstyle{definition}
\newtheorem{definition}[theorem]{Definition}
\theoremstyle{remark}
\theoremstyle{observation}
\newtheorem{observation}[theorem]{Observation}
\begin{document}

\twocolumn[
\icmltitle{Marginal Laplacian Score for Feature Selection}



\icmlsetsymbol{equal}{*}

\begin{icmlauthorlist}
\icmlauthor{Guy Hay}{equal,to}
\icmlauthor{Ohad Volk}{equal,to}
\end{icmlauthorlist}

\icmlcorrespondingauthor{Guy Hay}{guy.hay@intel.com}
\icmlaffiliation{to}{AI Solution Group, Intel Corporation}

\icmlkeywords{Machine Learning, Feature Selection, Imbalance, Unsupervised}

\vskip 0.3in
]



\printAffiliationsAndNotice{\icmlEqualContribution} 

\begin{abstract}
High-dimensional imbalanced data poses a machine learning challenge. In the absence of sufficient or high-quality labels, unsupervised feature selection methods are crucial for the success of subsequent algorithms. Therefore, we introduce a Marginal Laplacian Score (MLS), a modification of the well known Laplacian Score (LS) tailored to better address imbalanced data. We introduce an assumption that the minority class or anomalous appear more frequently in the margin of the features. Consequently, MLS aims to preserve the local structure of the dataset's margin. We propose its integration into modern feature selection methods that utilize the Laplacian score. We integrate the MLS algorithm into the Differentiable Unsupervised Feature Selection (DUFS), resulting in DUFS-MLS. The proposed methods demonstrate robust and improved performance on synthetic and public datasets.
\end{abstract}

\section{Introduction}
\label{Introduction}
High-dimensional imbalanced data presents a challenge in machine learning \cite{10.1145/3343440,ali2019imbalance,hasanin2019severely}. In recent years, the prevalence of high-dimensional datasets has grown, even in domains where accurate labeling of samples is challenging. This poses a significant challenge for machine learning algorithms: the task of discerning and disregarding potentially noisy or irrelevant dimensions in the absence of sufficient or any labeling \cite{9229137}. Supervised classification algorithms such as Lasso Regression, Random Forest, and Gradient Boosted Decision Trees have demonstrated the capacity to disregard irrelevant dimensions \cite{muthukrishnan2016lasso, hasan2016feature, rao2019feature}. This capability is a shared trait among various supervised feature selection algorithms, proving beneficial in enhancing the performance of classification endeavors \cite{kornbrot2014point, robnik2003theoretical}. The pairing of these two methods typically generates strong results but there are three common cases where they have been known to fail: dimensionality greater than the samples, highly imbalanced data, and absence of any labeling.

These scenarios pose a significant challenge as supervision relies on a sufficient number of samples, which is not fulfilled in these instances. While recent research has addressed highly imbalanced supervised tasks \cite{maldonado2018dealing, yan2019parameter, VOLK2024200316}, in highly imbalanced datasets for unsupervised tasks, there is a need for the entire process of feature selection and prediction to be unsupervised, using unsupervised classification algorithms as well \cite{breunig2000lof, liu2008isolation, hay2023sortad}. One of the most notable unsupervised feature selection methods is the Laplacian Score (LS) \cite{he2005laplacian}. More advanced techniques in recent years has led to significant developments, such as the introduction of Differentiable Unsupervised Feature Selection (DUFS) \cite{lindenbaum2021differentiable}, which is a modern enhancement of the Laplacian Score method, focused on ignoring irrelevant random features.

The unsupervised feature selection method focuses on identifying features that effectively capture the structure and information across the entire dataset. In highly imbalanced datasets, these features might prove irrelevant for downstream tasks, such as binary classification or anomaly detection. Since the most informative features are those that best separates the minority class from the majority class, and not necessarily captures all the information the dataset contains. As the majority class predominantly shapes the data's structure, unsupervised feature selection methods prioritizing the identification of features without accounting for imbalanced data tend to capture the underlying structure of the majority class, neglecting the minority class. These feature may be useful for distinguishing changes in the majority class but often fall short in capturing the desired differences between the minority class to the majority class. Therefore, there is a need for unsupervised feature selection methods better suited for imbalanced data.

With this objective, we propose the Marginal Laplacian Score (MLS). MLS is an improvement to LS, that focuses on capturing the marginal information of the dataset. We illustrate a theoretical two marginal features in the Appendix \ref{app:marginal_feature_illustration}. The emphasis on marginal information stems from our assumption that minority classes or anomalies typically exhibit abnormal behaviors, often found at the edges of a feature's distribution. For example, many engineering tests are designed so defects will hold values on the margins of the samples distribution \cite{latif2011idvp}. Furthermore, MLS, as a modification of LS, allows for the adaptation of algorithms that leverage LS, such as DUFS. To showcase this ability we introduce DUFS-MLS.

To illustrate our proposed method's effectiveness, we apply it to various unsupervised machine learning tasks. First, we consider synthetic data that includes informative marginal features along with irrelevant features. Secondly, we evaluate our method on 14 imbalanced public datasets from a wide range of domains, under two setups: unmodified and noisy. The unmodified setup represents the dataset as is, while the noisy setup includes additional noisy features. Across all setups, MLS and DUFS-MLS consistently demonstrate improved results. Furthermore, we empirically demonstrate the validity of our assumption in public datasets spanning multiple domains.

We summarize our contribution as follows:
\begin{itemize}
    \item We propose Marginal Laplacian Score, a modification of Laplacian Score better suited for imbalanced data. Also, incorporating it into DUFS, creating DUFS-MLS.
    \item We empirically show the correctness of our assumption that the minority class appears more frequently in the margin of the features. An assumption that helps focus unsupervised feature selection algorithms for imbalance data.
    \item We evaluate the proposed MLS and DUFS-MLS algorithms on synthetic and public datasets.
\end{itemize}

\section{Previous Methods}

The Laplacian Score (LS) \cite{he2005laplacian} is based on Laplacian Eigenmaps \cite{belkin2001laplacian} and Locality Preserving Projection \cite{he2003locality}, and attempts to assign a score that  quantifies the extent to which a particular feature preserves the local distances between samples. This is established by requiring a correlation between the distances among feature samples and the overall sample distances. Consequently, distant samples will indicate distant feature values and vice versa. LS's equation is given in Eq. \ref{eq_laplacian_score}.

\begin{equation}
L_r = \sum_{i,j}\frac{(f_{r_i}-f_{r_j})^2S_{ij}}{Var(f_r)}=\frac{\Tilde{f}_r^TL\Tilde{f}_r}{\Tilde{f}_r^TD\Tilde{f}_r}
\label{eq_laplacian_score}
\end{equation}
Where $f_r$ is the $r$th feature, $S_{ij}=e^{\frac{\| x_i - x_j \|}{t}}$ is a weighting factor, $\Tilde{f}_r=f_r-\frac{f_r^TD\mathds{1}}{\mathds{1}^TD\mathds{1}}$, $L=D-S$, $D=diag(S\mathds{1})$. Intuitively, $S_{ij}$ captures the global distance and  $(f_{r_i}-f_{r_j})^2$ the local distance. A low LS's score, indicating the feature preserves the distances well, is when samples that are globally close, meaning that  $S_{ij}$ has a large value, have a low $(f_{r_i}-f_{r_j})^2$, and vise versa for farther away samples globally. The sum in Eq. \ref{eq_laplacian_score} are pairs for every two sample combination, in imbalance datasets the majority of pairs will strictly be of samples from the majority class. Thus, leading to the predominant of the majority classes distribution on the underlying data structure being learned. In addition, when considering the entire dataset, correlated features will dominate the global distances, causing both terms in the numerator to align. Consequently, this alignment results in a low score, leading to the selection of these features.

As mentioned in the introduction, Differentiable Unsupervised Feature Selection (DUFS) is an extension of LS designed to disregard irrelevant random features. The parametric free equation is given in Eq. \ref{eq:dufs}. DUFS introduces stochastic gates to the input, enabling the calculation of LS for various feature sets, thereby preventing LS from assigning low scores to irrelevant features. The authors argue that in scenarios where numerous irrelevant features exist, they can form structures smaller than the global structure but still substantial enough to influence LS. Hence, computing LS using different masks is more likely to neutralize the smaller structures rather than the correct, larger structure, enabling the stochastic gate to effectively suppress the irrelevant features.

\begin{equation}
    L(\mu)=-\frac{\mathrm{Tr}[\Tilde{F}^TL_{\Tilde{F}}\Tilde{F}]}{m\sum_{i=1}^d\mathds{P}(Z_i\geq0)+\delta}
    \label{eq:dufs}
\end{equation}

Where $F$ is the whole feature space, $\Tilde{F}=F\odot Z$, $\odot$ is an element-wise multiplication, the Hadamard product, $L_{\Tilde{F}}$ be the random walk graph Laplacian computed on $\Tilde{F}$ and $Z$ is the stochastic gate, $m$ is the number of gates and $d$ is the number of features, and $\delta$ is a small constant.

\section{Method}
\subsection{Preliminaries}

We define $\mathcal{X}$ as the original sample space, $x_i \in \mathcal{X}$ as sample $i$, and $\mathcal{F}$ as the feature space. It is important to note that not all feature margins are informative. If a feature is strongly skewed right, we expect most anomalies to lay on the right side, and vise versa. Therefore, we would define the margin for right skewed features to be only the right side of the feature, neglecting the left margin. Thus, each feature's skewness will be used to determine which side is of interest. The following definitions will help us define for each feature the relevant margin and to define indicator functions for said margin. We will further generalize the feature's margin definition to define the dataset margin.

\begin{definition}[Feature Skewness Set]
Given skewness thresholds $s_{right},s_{left},s_{two-side}\in \mathbb{R}$, and features $\mathcal{F}$. A feature skewness set is the set of all features that their skewness values are in a given range specified by the given thresholds. There are three sets of interest defined as follows:
\begin{equation}
    \mathcal{F}^{right}=\{f \in \mathcal{F} | S(f) >= s_{right}\}
    \nonumber
\end{equation}
\begin{equation}
    \mathcal{F}^{two-side}=\{f \in \mathcal{F} | s_{left} < S(f) < s_{right}\}
    \nonumber
\end{equation}
\begin{equation}
    \mathcal{F}^{left}=\{f \in \mathcal{F} | S(f) <= s_{left}\}
    \nonumber
\end{equation}
\end{definition}
Where $S$ is a function that calculates the skewness of the feature. 
\begin{definition}[Feature Margin]
Given quantile $q$ of feature $f_r$, the feature's margin is defined as the set of all samples in a specific range dictated by $q$. There are three sets of interest defined as follows:
\begin{equation}
    \mathcal{M}^{right}_r=\{x_i\in \mathcal{X}| x_i > q\}
    \nonumber
\end{equation}
\begin{equation}
    \mathcal{M}^{two-sided}_r=\{x_i\in \mathcal{X}| x_i < (1-\frac{q}{2}) \vee x_i > \frac{q}{2} \}
    \nonumber
\end{equation}
\begin{equation}
    \mathcal{M}^{left}_r=\{x_i\in \mathcal{X}| x_i < (1-q)\}
    \nonumber
\end{equation}
\end{definition}

\begin{definition}[Margin of Interest]
Given a feature $f\in \mathcal{F}^{s}$ where $s\in \{right, two-sided, left \}$. The margin of interest is $f_r$'s feature margin corresponding to the specified $s$, $\mathcal{M}^{s}_r$. The margin of interest for feature $f_r$ is denoted by $\mathcal{M}_r$. 
\end{definition}

\begin{definition}[Dataset Margin]
Given $k\in \mathbb{N}$, the dataset margin is the set of all samples that are in at least $k$ margin's of interest. 
\begin{equation}
    \mathcal{M}^{k} = \{ x \,|\, |\{r \,|\, x \in \mathcal{M}_r\}| \geq k \}
    \nonumber
\end{equation}
For simplicity when $k=1$, we will neglect the subscript $k$ and write $\mathcal{M}$.
\label{def:data_set_margin}
\end{definition}

\begin{definition}[Feature Margin Indicator Function]
The feature margin indicator function is a function indicating whether a sample $x$ is in a features margin of interest, and is defined as follows: $M_r: \mathcal{X} \xrightarrow{} \{0\}\cup \mathcal{M}_r$
\begin{equation}
M_r(x_i) = 
\begin{cases}
x_{i_r}, & x_i\in \mathcal{M}_r \\
0, & \text{otherwise}
\nonumber
\end{cases}
\end{equation}
\end{definition}

\begin{definition}[Dataset Margin Indicator Function]
Given $k\in \mathbb{N}$, the dataset margin indicator function is an indicator function returning whether a sample is in the dataset margin.
\begin{equation}
M^{k}(x_i) = 
\begin{cases}
x_{i}, & \text{if} \quad x_i\in \mathcal{M}^{k} \\
0, & \text{otherwise} \\
\end{cases}
\label{eq_margin_indicator_function}
\end{equation}
As before, for $k=1$ we neglect the subscript $k$ and write $M$.
\label{def:dataset_margin_function}
\end{definition}

\subsection{Marginal Laplacian Score}
The Marginal Laplacian Score algorithm proposes to combine Laplacian Scores’ idea, of selecting features that preserve the local sample distances, and by that, the global structure, together with our main assumption that the most relevant data for anomaly detection lies at the margin of the features. By combining these ideas, MLS aims to find features that preserve the local margin's sample distance, meaning preserving the global structure of the margins.

Similar to the Laplacian Score, we aim to introduce a weighting parameter that will assign weight to the distances between marginal units. For that, we will use two different weighting terms, a sample-level weight $u_i$, and an interaction level weight $w_{ij}$.

The sample-level weight $u_i$ – indicates the significance of sample $i$ in defining the global margins. This weight should increase monotonically with the number of features sample $i$ is in their margin. Therefore, will be defined as in Eq. \ref{eq_unit_level_weight}.

\begin{equation}
u_i = log(\|M^k(x_i)\|_0 + 1)
\label{eq_unit_level_weight}
\end{equation}

Eq. \ref{eq_unit_level_weight} essentially counts the number of times a given sample is inside a feature's margin, indicating the importance of the sample in defining the global margin. The log function is used to gradually reduce the importance of being within another feature's margin. We note that any concave function can be used for this purpose. We further analyze and discuss the sample-level weight in section \ref{sample_weight_analysis}.

The interaction level weight $W_{ij}$ is similar to that of the Laplacian Score, it indicates how close samples $i$ and $j$ are in the global structure. However, unlike the Laplacian Score, MLS focuses solely on the margin's structure. Hence, it will be defined as in Eq. \ref{eq_interaction_level_weight}.

\begin{equation}
w_{ij} = e^{-\frac{\|M^k(x_i)-M^k(x_j)\|}{t}}
\label{eq_interaction_level_weight}
\end{equation}

Where $t$, a hyper parameter discussed later, determines the rate of decay. Eq. \ref{eq_interaction_level_weight}, calculates a decaying weight as a function of the distance of $x_i,x_j$ in the global margins space. It indicates the proximity of samples within the global margins space.

The final MLS score Eq. \ref{eq_mls_score} is a combination of the weights described in Eq. \ref{eq_unit_level_weight}, \ref{eq_interaction_level_weight} with the original LS Eq. \ref{eq_laplacian_score}. As with LS, a lower score indicates that the feature is more informative.

\begin{align}
MLS_r=
\sum_{ij\in \mathcal{M}^k}\frac{(f_{r_i}-f_{r_j})^2w_{ij}u_{i}}{Var(f_r)}
\label{eq_mls_score}
\end{align}

Under the assumption that the margins of the features will hold a substantial amount of the minority class or anomalies, the sum in Eq. \ref{eq_mls_score} and $w_{ij}$ will be more evenly affected by them, causing the structure learnt to hold information from both classes. Moreover, by considering a more even mix of majority and minority or anomaly samples, the interaction between them will have a stronger effect on the score. This is because the sum in Eq. \ref{eq_mls_score} will possess a more significant relative representation of pairs of samples, with one from each class. Therefore, allowing the selection of features better representing the difference between the classes or difference from normal samples to anomalies. 

Furthermore, by having MLS focus only on the margin of the features, correlated features have a lower effect on the score since only correlation that are related to the margins will have an effect. This allows MLS to more faithfully ignore correlated feature families and focus only on relevant information.

\subsection{A note on non-marginal data}\label{sec:non_marg}
Our marginal assumption applies to a wide array of data types, spanning from medical data to various engineering datasets. To enable MLS to contribute effectively across all dataset types, we propose employing any anomaly detection algorithm to select samples that will be regarded as the 'margins' of the dataset. This approach will help to ease our assumption regarding the marginalization of the minority class. We believe that this is a promising research direction that will be explored in the future and is beyond the scope of this paper.

\subsection{Matrix formulation}

As opposed to Eq. \ref{eq_laplacian_score}, the calculation is not symmetric between $i,j$ because $u_i$ is for a specific $i$. Nevertheless, a matrix formulation is desired. In Eq. \ref{eq_mls_matrix_formulation}, we reformulate MLS into matrix form, leading to an orders-of-magnitude acceleration in calculation speed.

\begin{observation}
$MLS_r$ in Eq. \ref{eq_mls_score} can be calculated using the martix form:
\begin{equation}
    MLS_r=\frac{f^TUDf+\mathds{1}^TUWf^2-2f^TWUf}{Var(f_r)}
    \label{eq_mls_matrix_formulation}
\end{equation}
\end{observation}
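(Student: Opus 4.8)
The plan is to establish the identity by a direct expansion of the quadratic in the numerator of Eq.~\ref{eq_mls_score} and then to match the resulting sums, term by term, against Eq.~\ref{eq_mls_matrix_formulation}. Write $f := f_r$ for the feature vector and let $f^2$ denote its entrywise square. I would first fix the three matrices implicit in the matrix form: the diagonal weight matrix $U = \mathrm{diag}(u_1, u_2, \dots)$ built from Eq.~\ref{eq_unit_level_weight}, the interaction matrix $W = (w_{ij})$ from Eq.~\ref{eq_interaction_level_weight}, and the degree matrix $D = \mathrm{diag}(W\mathds{1})$, so that $D_{ii} = \sum_j w_{ij}$. The single structural fact driving the whole argument is that $W$ is symmetric: since $\|M^k(x_i) - M^k(x_j)\| = \|M^k(x_j) - M^k(x_i)\|$ we get $w_{ij} = w_{ji}$. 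Note that $U$ is diagonal but in general does \emph{not} commute with $W$, which is exactly what makes the reformulation nontrivial.

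Next I would expand $(f_{r_i} - f_{r_j})^2 = f_{r_i}^2 - 2 f_{r_i} f_{r_j} + f_{r_j}^2$ and distribute the weight $w_{ij} u_i$, splitting the numerator into three sums. For the $f_{r_i}^2$ contribution, I factor out $u_i f_{r_i}^2$ and sum $w_{ij}$ over $j$ to obtain $\sum_i u_i f_{r_i}^2 D_{ii}$; because $U$ and $D$ are both diagonal this is exactly $f^T U D f$. For the $f_{r_j}^2$ contribution, summing $u_i w_{ij}$ over $i$ yields the $j$-th column sum of $U W$, so the sum collapses to $\mathds{1}^T U W f^2$. Both identifications are routine once the degree matrix and the $\mathds{1}^T(\cdot)$ contraction are set up correctly.

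The step I expect to be the main obstacle, and the only place where symmetry is genuinely used, is the cross term $-2\sum_{ij} u_i f_{r_i} f_{r_j} w_{ij}$. The naive grouping attaches $u_i$ to $f_{r_i}$ and suggests $f^T U W f$, whereas the target in Eq.~\ref{eq_mls_matrix_formulation} is the transposed-looking $f^T W U f$, whose expansion $\sum_{ij} f_{r_i} w_{ij} u_j f_{r_j}$ carries the weight on index $j$ instead of $i$. To reconcile the two I would relabel the dummy indices $i \leftrightarrow j$ in the target expression and then invoke $w_{ji} = w_{ij}$: the symmetry of $W$ converts $\sum_{ij} u_j f_{r_i} f_{r_j} w_{ij}$ back into $\sum_{ij} u_i f_{r_i} f_{r_j} w_{ij}$, matching the cross term exactly. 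Collecting the three matched pieces and dividing by $Var(f_r)$ then yields Eq.~\ref{eq_mls_matrix_formulation}.

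A final detail to dispatch is the restriction $ij \in \mathcal{M}^k$ under the sum versus the unrestricted matrix products. Since $u_i = \log(\|M^k(x_i)\|_0 + 1)$ equals $\log 1 = 0$ for every sample lying outside the dataset margin, each term carries a factor $u_i$ (or, after the index swap, $u_j$) that vanishes off the margin. Hence the matrix expression formed over all samples silently discards exactly the contributions the margin restriction would have removed, so no explicit masking of $W$ or $D$ is required and the two expressions coincide.
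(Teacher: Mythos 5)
Your proof is correct and follows essentially the same route as the paper's: expand $(f_{r_i}-f_{r_j})^2$, distribute the weight $w_{ij}u_i$, and identify the three resulting double sums with $f^TUDf$, $\mathds{1}^TUWf^2$, and $f^TWUf$. You are in fact somewhat more careful than the paper, which silently uses the symmetry $w_{ij}=w_{ji}$ when equating the cross term with $f^TWUf$ and never comments on the restriction of the sum to $\mathcal{M}^k$ — both points you make explicit.
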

\begin{proof}
    The proof is in Appendix \ref{app:proof_mls_matrix}.
\end{proof}

\subsection{Temperature $t$ hyper parameter for large datasets}

In practical scenarios, the number of features could be quite extensive. This scenario can cause the numerator in Eq. 5 to become excessively large, resulting in a value that is too small, potentially leading to numerical errors in further calculations. The hyper parameter $t$ assists in addressing this issue. In many common normalization schemes like standard scaling, a majority of the feature samples tend to fall within the range of $[-1,1]$. In the case of samples within this range, the maximum distance in the numerator equals $2\sqrt{n_f}$ where $n_f$ represents the number of features. Thus, when $\sqrt{n_f} \gg t \rightarrow w_{ij} \ll 0$ causing numerical instability. To prevent such occurrences, we'll set $t$ to 10\% of the maximum value. Therefore, $t=\max(1,\frac{2\sqrt{n_f}}{10})$ will be used. The use of the $\max$ function ensures that $t$ does not become excessively small, particularly in smaller datasets.

\subsection{Differentiable Unsupervised Feature Selection with Marginal Laplacian Score}
DUFS complements LS by introducing stochastic gates to the input. It allows calculating LS over different subset of the feature space, striving to lower the overall score by closing or opening the stochastic gates on the input. By computing LS on each subset, DUFS focuses on the large structure formed by the informative features. As MLS is a modification of LS, our proposition involves replacing LS with MLS, thus creating DUFS-MLS. Utilizing MLS instead of LS enables DUFS to focus on features that better represent the marginal structures of the data. The integration of DUFS and MLS allows the latter to more effectively focus on genuine marginal features while disregarding irrelevant information and structure in the data. The formulation for DUFS-MLS is expressed by Eq. \ref{eq:dufs_mls}.

\begin{equation}
    L(\mu)=-\frac{\mathrm{Tr}[\frac{\Tilde{F}^TUD\Tilde{F}+\mathds{1}^TUW\Tilde{F}^2-2f^TWU\Tilde{F}}{Var(\Tilde{F})}]}{m\sum_{i=1}^d\mathds{P}(Z_i\geq0)+\delta}
    \label{eq:dufs_mls}
\end{equation}

Where $Var(\Tilde{F})$ is the variance of each feature. We note that dividing the variance may be counter intuitive since $\Tilde{F}$ is multiplied by a mask. In our preliminary experiments it was found useful and therefore it was kept.

\section{Experiments}
\subsection{Synthetic Data Experiment}

We conduct a series of experiments on synthetic datasets to comprehensively assess the efficacy of our proposed method, MLS. The Laplacian Score has two main disadvantages when applied to imbalanced data:
\begin{enumerate}
    \item Correlated Features - The Laplacian Score tends to selects the biggest correlated features families.
    \item Unknown Objective - The Laplacian Score selects the features that best agree with the whole sample graph, without a clear focus on relevant areas that are important for imbalanced datasets.
\end{enumerate}

The Laplacian Score assigns lower scores to features that agree well with the graph's structure, leading to the selection of correlated features. In imbalanced datasets, however, even highly correlated features may yield different results. Consequently,  these features families cannot be effectively filtered out in advance. As a result, the Laplacian score will tends to favor the selection of correlated features.

In addition, the Laplacian Score tends to select features that primarily capture the overall structure of the entire dataset, often influenced by the dominant majority class distribution. In imbalanced datasets, however, it becomes crucial to account for the behavior of minority samples, as they play a pivotal role. Due to their limited impact on the graph structure, features chosen by the Laplacian Score may not be optimal for effectively distinguishing between the majority and minority classes.

To demonstrate MLS's ability in identifying marginal features, we implement three different setups with imbalance ratios of 0.90, 0.95, and 0.97, each repeated 100 times, and then average the results. We note the imbalance ratio as $\rho$. We design this experiment such that each setup poses a more challenging task for feature selection algorithms to accurately identify the relevant features.

Synthetic dataset setups:
\begin{itemize}
    \item \textbf{Setup I: 5 Marginal and 5 Non-Marginal} - The dataset consists of 5 non-marginal features randomly generated from a normal distribution.The 5 marginal features consist of separately randomly generated negative and positive samples, with the positive samples positioned at the margin of the negative samples. This setup provides a straightforward demonstration of how each feature selection algorithm identifies the marginal features. 
    \item \textbf{Setup II: 5 Marginal and 5 Non-Marginal Correlated} - The dataset consists of 5 non-marginal features randomly generated from a normal distribution, aiming to preserve a Pearson correlation of 0.9 between them. The 5 marginal features are generated as before. This setup aims to demonstrate MLS's ability to ignore correlations that are not attributed to the margins of the features.
    \item \textbf{Setup III: 5 Marginal, 5 Non-Marginal Correlated, and 90 Non-Marginal} - The dataset consists of 5 non-marginal and 5 marginal features as before, along with an additional 90 randomly generated normally distributed features. This setup aims to demonstrate MLS's ability to disregard correlations not associated with the feature margins, as well as handle numerous non-informative features.
\end{itemize}

We present the results in Table \ref{table_1_classification_accuracy}. MLS consistently outperformed LS across all setups and three imbalance ratios. The LS method experienced a significant drop in performance with the introduction of correlated features, whereas MLS's performance was only slightly affected. Under our assumption that the minority class is more frequently present in the margins, correlated features which are not correlated in the margins as well are more likely to be irrelevant for the detection of the minority class. The lack of correlation in the margins implies that, even though these features may exhibit some degree of correlation within the overall dataset, this correlation does not contribute significantly to the detection of the minority class.

In other words, the correlation structure observed in these features is not indicative of the minority instances we seek to detect, if a correlation exists only in most of the data but not in the regions where minority instances are prevalent, it suggests that the correlated behavior is not a distinguishing factor for the minority class. Therefore, when selecting features in imbalance classification, it becomes crucial to not be biased to the overall correlation but also assess whether this correlation extends to the margins where minority instances are more likely to occur. Features that lack correlation in these critical regions may be deemed less relevant for capturing abnormal patterns and may not contribute significantly to the effectiveness of anomaly detection models. In light of that MLS's performance was only slightly affected and achieved a near perfect result, it highlights it's ability to focus on the margin of each feature, allowing it to select the most informative samples.


\begin{table}[t]
\begin{center}
\begin{small}
\begin{sc}
\begin{tabular}{@{}clllllllllllll@{}}
\toprule
\textbf{Experiment} & & \multicolumn{2}{c}{MLS} & \multicolumn{2}{c}{LS} \\
\cmidrule(lr){3-4}\cmidrule(lr){5-6}
&  & Mean & Std. & Mean & Std.\\
\midrule
Setup I & $\rho=0.9$ & \textbf{100} & 0 & 90.4 & 12.48\\
& $\rho=0.95$ & \textbf{100} & 0 & 52.8 & 17.78\\
& $\rho=0.97$ & \textbf{100} & 0 & 37.8 & 16.7\\
Setup II & $\rho=0.9$ & \textbf{100} & 0 & 0 & 0 \\
& $\rho=0.95$ & \textbf{99.8} & 1.98 & 0 & 0 \\
& $\rho=0.97$ & \textbf{98} & 8.71 & 0 & 0\\
Setup III & $\rho=0.9$ & \textbf{100} & 0 & 0 & 0\\
& $\rho=0.95$ & \textbf{99.8} & 1.98 & 0 & 0\\
& $\rho=0.97$ & \textbf{98} & 8.71 & 0 & 0\\
\bottomrule
\end{tabular}
\caption{A comparison of the predictive accuracy over synthetic datasets. The imbalance ratio of the dataset is noted as $\rho$.}
\label{table_1_classification_accuracy}
\end{sc}
\end{small}
\end{center}
\vskip -0.1in
\end{table}

\begin{table*}[ht]
\centering
\resizebox{\textwidth}{!}{
\begin{tabular}{@{}cllllllllllllllll@{}}
\toprule
 \textbf{Experiment}                         &  &   
\multicolumn{2}{c}{MLS} & 
\multicolumn{2}{c}{LS}  &
\multicolumn{2}{c}{DUFS-MLS} & \multicolumn{2}{c}{DUFS}
\\

\cmidrule(lr){3-4}\cmidrule(lr){5-6}\cmidrule(lr){7-8}\cmidrule{9-10}
 &  Setup & Mean & Std. &  Mean & Std. & Mean & Std. & Mean & Std.\\

\midrule
Unmodified &  $5$ features & 0.902 & 0.103 & 0.895 & 0.103 & \textbf{0.916} & 0.078 & 0.909 & 0.072\\
& $7$ features & 0.916 & 0.086 & 0.904 & 0.091 & \textbf{0.933} & 0.072 & 0.922 & 0.069\\
&$10$ features & 0.931 & 0.077 & 0.919 & 0.084 & \textbf{0.943} & 0.064 & 0.936 & 0.069\\
&$12$ features & \textbf{0.947} & 0.062 & 0.928 & 0.081 & \textbf{0.947} & 0.062 & 0.933 & 0.078\\

Noisy &  $5$ features & \textbf{0.850} & 0.152 & 0.726 & 0.216 & 0.834 & 0.134 & 0.576 & 0.112\\
& $7$ features & \textbf{0.854} & 0.140 & 0.726 & 0.221 & 0.821 & 0.134 & 0.629 & 0.139\\
&$10$ features & 0.858 & 0.143 & 0.735 & 0.216 & \textbf{0.867} & 0.123 & 0.660 & 0.143\\
&$12$ features & \textbf{0.866} & 0.132 & 0.848 & 0.140 & \textbf{0.866} & 0.109 & 0.654 & 0.138\\

\bottomrule
\end{tabular}
}
\caption{A comparison of the AUC ROC scores over public datasets.}
\label{tbl:experimentsPublicdatasets}
\end{table*}

\begin{table*}
\centering
\resizebox{\textwidth}{!}{
\begin{tabular}{@{}llllllllllllllll@{}}
\toprule
& \multicolumn{2}{c}{MLS} &
\multicolumn{2}{c}{LS} &
\multicolumn{2}{c}{DUFS-MLS} & \multicolumn{2}{c}{DUFS} \\
\cmidrule(lr){2-3}\cmidrule(lr){4-5}\cmidrule(lr){6-7}\cmidrule{8-9}
Setup & Mean & Std. & Mean & Std. & Mean & Std. & Mean & Std. \\
\midrule
$5$ features & \textbf{0.749} & 0.354 & 0.480 & 0.467 & 0.737 & 0.342 & 0.091 & 0.161 \\
$7$ features & \textbf{0.695} & 0.359 & 0.461 & 0.468 & 0.660 & 0.357 & 0.099 & 0.154 \\
$10$ features & \textbf{0.636} & 0.369 & 0.455 & 0.472 & 0.601 & 0.347 & 0.111 & 0.167 \\
$12$ features & \textbf{0.629} & 0.357 & 0.507 & 0.412 & 0.584 & 0.359 & 0.102 & 0.147 \\
\bottomrule
\end{tabular}
}
\caption{A comparison of the accuracy performance over public datasets in the noisy setup.}
\label{tbl:experimentsPublicdatasetsNoisyAccuracy}
\end{table*}

\begin{figure*}[ht]
\centering
{\includegraphics[width=\textwidth]{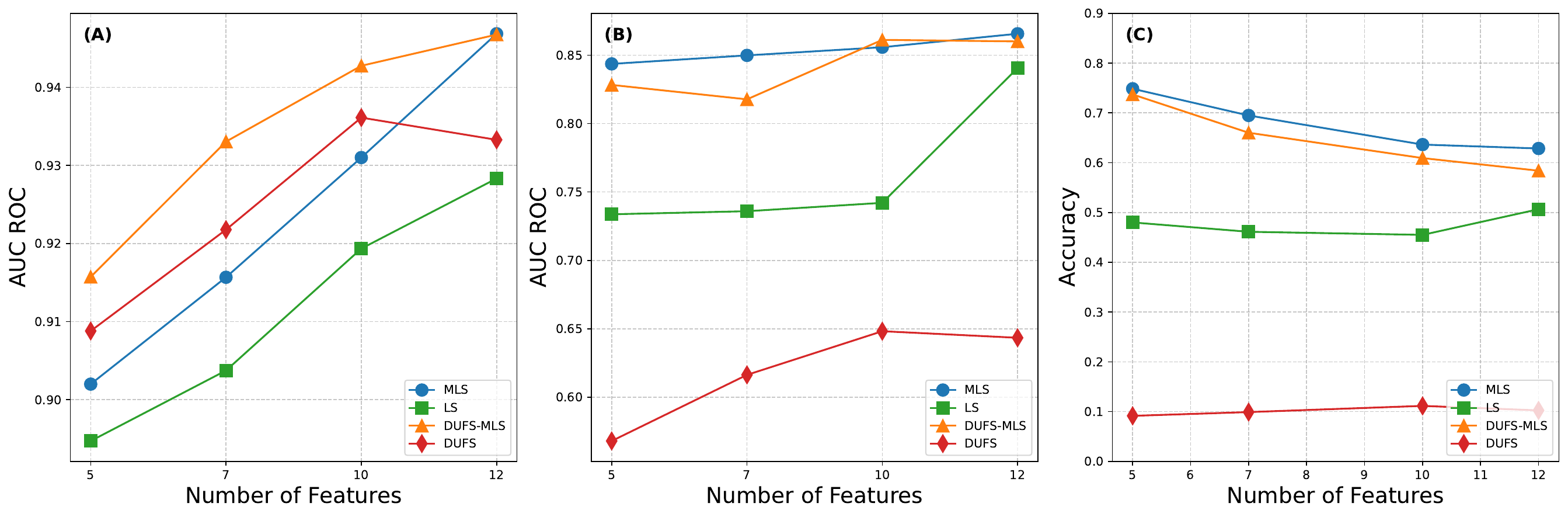}}
\caption{Comparison of unmodified and noisy data results. (A) Unmodified data setup results showing the AUC ROC values. (B) Noisy data setup results showing the AUC ROC values. (C) Predictive accuracy metrics for noisy data setup.}
\label{raw_noisy_auc}
\end{figure*}

\subsection{Public Data Experiment}

To further asses our proposed method MLS, we conduct a series of experiments on publicly  available datasets. We have selected a group of datasets from the Outlier Detection datasets (ODDS) library \cite{Rayana:2016}. Our focus has been on datasets where our marginal assumption, that anomalies are more likely to be in the margins of the features, holds true. Consequently, we excluded image-related anomaly datasets and retained engineering, medical, and other tabular datasets. A detailed table providing information on these datasets can be found in the appendix. We conducted experiments using two setups: unmodified and noisy. The unmodified setup involves using the datasets as is. Recognizing that these datasets are small and may not adequately represent scenarios requiring unsupervised feature selection methods, we introduced the noisy setup. 

Noisy setup consisted of adding 10 random correlated features with Pearson correlation coefficient of 0.9, generated from a multivariate normal distribution with a mean of 0 and a standard deviation of 1. Additionally, normally distributed random features with a mean of 0 and a standard deviation of 1 were incorporated until the dataset comprised 309 features. With an increased number of features in the noisy setup, the algorithm must distinguish between informative features, correlated random features, and those uncorrelated. This critical process of distinction is essential for accurate feature selection, making the noisy setup a more challenging and practical scenario. Therefore, it is the most important experiment to test the performance of different feature selection algorithms. Out of the 14 datasets chosen, 3 had too few features for evaluation in the unmodified setup, and they were considered only for the noisy set up.

Each experiment involved recording the outcomes of a feature selection method through a binary classification approach. We apply a stratified split for each dataset, creating training, validation, and holdout sets with the following percentages: 35\%, 35\%, and 30\%, respectively. We then used a feature selection algorithm to choose a subset of features. We utilize two classification methods, Logistic Regression (LR) \cite{cox1958regression} and Extra Trees (ET) \cite{geurts2006extremely}, to evaluate the predictive performance of feature subsets selected by the algorithms. We perform a random grid search with a budget of 10 for feature selection and 6 for the classification models to determine the best hyper parameters for both algorithms. For LS which had less than 10 hyper parameter combination, we use an exhaustive grid search. We select the optimal combination based on its performance score on the validation set, determined by the area under the receiver operating characteristic (AUC ROC) curve. Subsequently, we use and evaluate this chosen combination on a holdout set using the AUC ROC score. In the noisy setup, we also employ predictive accuracy, which represents the number of features selected from the non-added noisy features. For each dataset, we conduct the experiment 10 times, each with a different random split. The final results represent the mean of all 10 runs for each dataset and across all datasets.

The results of the unmodified and noisy setup are summarized and presented in  Table \ref{tbl:experimentsPublicdatasets}.
While conducting a performance comparison among MLS, LS, DUFS, and DUFS-MLS, we specifically focus on pairwise comparisons: MLS vs. LS and DUFS vs. DUFS-MLS. This approach ensures a comprehensive understanding of their respective strengths and weaknesses.
In the unmodified setup, the AUC ROC average results are as follows: MLS 0.924, LS 0.912, DUFS-MLS 0.935, and DUFS 0.925. Across experiments, one of the MLS modifications consistently outperforms the rest and that MLS and DUFS-MLS outperform their respective counterparts. In the noisy setup, a similar trend is observed. The AUC ROC average results are as follows: MLS 0.857, LS 0.759, DUFS-MLS 0.847, and DUFS 0.630. Across experiments, one of the MLS modifications consistently outperforms the rest. It can be noted that MLS and DUFS-MLS also outperform their respective counterparts in terms of AUC ROC results in the noisy setup.
LS and DUFS exhibit a notable performance deterioration in the noisy setup, which we attribute to the challenging nature of the noisy setup. In contrast, the performance of MLS modifications was only slightly affected, attesting to how MLS manages to focus on the margin of each feature.

The predictive accuracy results, specifically used in the noisy setup, are presented in Table \ref{tbl:experimentsPublicdatasetsNoisyAccuracy}, showcasing average scores of 0.677, 0.476, 0.646, and 0.101 respectively. Notably, MLS outperforms, closely followed by DUFS-MLS. The margin from the other algorithms highlights the performance gap.

\subsection{Sample-Level Weight Analysis}
\label{sample_weight_analysis}
We analyze the influence of the proposed sample-level weight, demonstrating its effectiveness and highlighting the importance of choosing an appropriate quantile.
We calculate sample-level weight distributions for negative and positive samples across different quantiles, ranging from 0.01 to 0.3, defining various feature margins. Subsequently, we computed Kolmogorov-Smirnov \cite{massey1951kolmogorov} distances between the distributions. Furthermore, we perform the Kolmogorov-Smirnov test with the distances to determine if the two distributions are statistically different. The average distances and corresponding p-values for each quantile across all datasets are jointly presented in Figure \ref{sample_level_weight_ks_pval}.

\begin{figure*}[htbp]
\vskip 0.2in
\centering
{\includegraphics[width=\textwidth]{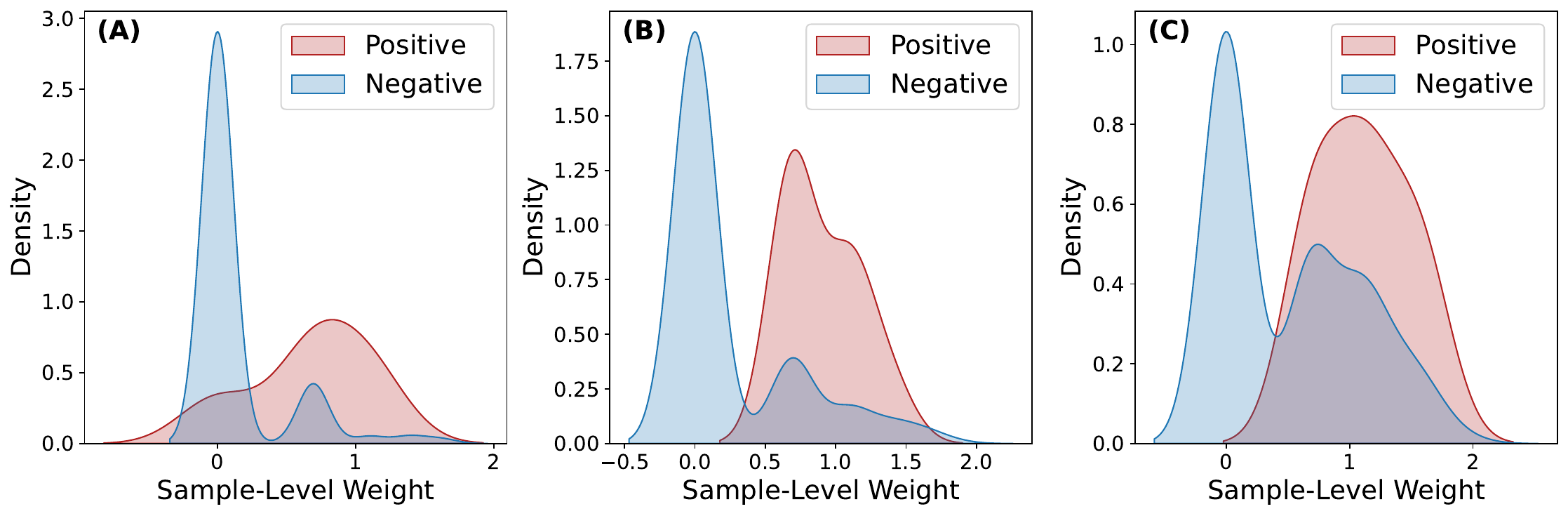}}
\caption{Sample-level weights across the glass dataset for various margin quantiles. In (A), (B), and (C), corresponding to quantiles 0.025, 0.05, and 0.1, respectively, the depicted figures showcase the distribution of weights at the sample level.}
\label{sample_level_weight_glass}
\vskip -0.2in
\end{figure*}

\begin{figure}[htbp]
\begin{center}
\centerline{\includegraphics[width=\columnwidth]{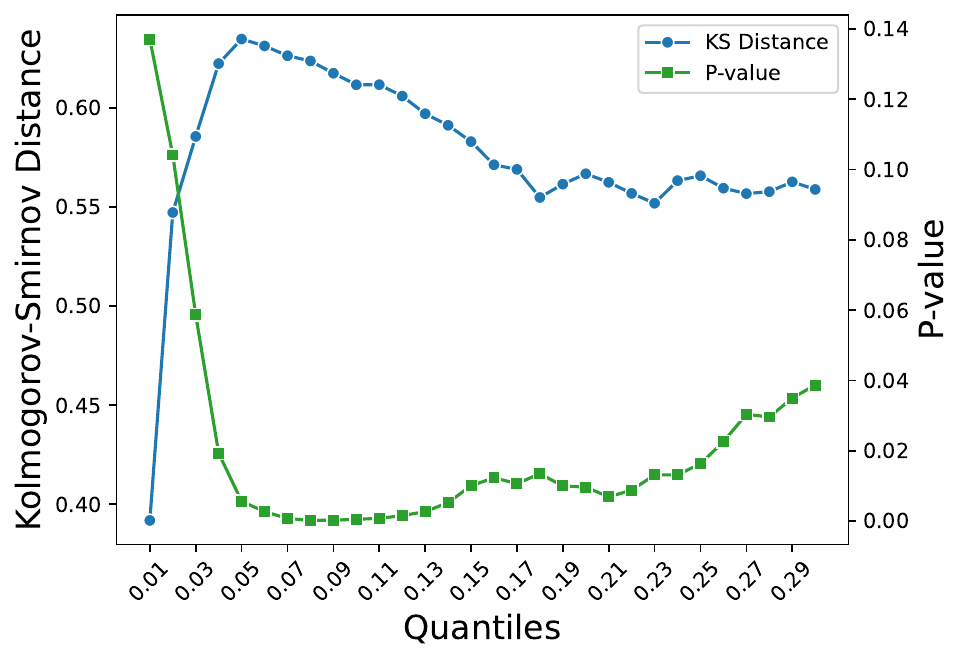}}
\caption{Mean Kolmogorov-Smirnov distances and corresponding p-values across various quantiles.}
\label{sample_level_weight_ks_pval}
\end{center}
\vskip -0.5in
\end{figure}

For the majority of quantiles, the p-values indicate statistical significance, signifying differences between the two distributions. However, it is important to note a distinction in the values of the distances, where a larger value is considered more favorable. Notably, under the assumption of marginality, the sample-level weight may function as an unsupervised separator between positive and negative samples. Fig. \ref{sample_level_weight_glass} provides a visual representation of sample-weight distributions across the glass dataset at three distinct quantiles. Notably, a noticeable distribution shift is evident between the positive and negative classes. The calculated Kolmogorov-Smirnov distances for these distributions are 0.617, 0.737, and 0.502 for the 0.025, 0.05 and 0.1 quantiles respectively. While all three distributions differ, the 0.05 quantile displays the most pronounced shift. This distinction is crucial as it aids the algorithm in discerning the most informative features for decision-making.

\subsection{Empirical Proof of the Marginal Assumption}

Our main assumption, asserting that the minority class will be more prevalent in the dataset margin as per Definition \ref{def:data_set_margin}, is crucial for focusing unsupervised feature selection algorithms on the specific task of distinguishing between the minority and majority classes. The sample-level weights serve as a metric, shedding light on how frequently a sample appears within the dataset margin. Consequently, the analysis in Section \ref{sample_weight_analysis} provides empirical validation for our assumption. In Fig. \ref{sample_level_weight_ks_pval}, the depiction of mean Kolmogorov-Smirnov distances between minority (positive) and majority (negative) classes across 11 datasets, spanning diverse domains like medical, engineering, chemistry, and physics, reinforces the idea that the minority class is more likely to be situated within the dataset's margin. Moreover, Fig. \ref{sample_level_weight_ks_pval} incorporates the p-value of the Kolmogorov-Smirnov distance, signifying its statistical significance and thereby upholding our confidence in the validity of our assumption across various domains.

\section{Conclusions}
We propose a feature selection algorithm called Marginal Laplacian Score (MLS), which enhances the effectiveness of the well-known Laplacian Score (LS) in addressing imbalanced datasets. Motivated by the concept that anomalous samples exhibit marginal behaviors, MLS is designed to identify features that better align with the structure of marginal data. By introducing two novel elements, sample-level weight and marginal interaction level weight, which allows generating the MLS equation which preserves the datasets marginal graph. This leads to improved and robust results on synthetic and public datasets, showcasing MLS's ability to identify features that hold information for distinguishing between the majority and minority classes. Additionally, we empirically proved our main assumption, that minority or anomalous samples are more likely to exists in the margins of the features, proving MLS's usefulness holds on real world datasets.

Furthermore, our exploration reveals that MLS, as a modification of LS, proves advantageous when incorporated into the Differentiable Unsupervised Feature Selection (DUFS) algorithm. This integration, resulting in DUFS-MLS, showcases notable improvements, emphasizing the potential of MLS in the feature selection framework. Improvements which achieve enhanced performance on public datasets.

Future research efforts can be directed towards generalizing the method to accommodate datasets with less marginal characteristics, as suggested in Section \ref{sec:non_marg}.
Another potential approach involves the development of hybrid methods that leverage MLS in or alongside other techniques to enhance feature selection outcomes.

\newpage
\section*{Acknowledgments}
We are grateful to Dr. Amitai Armon for his guidance through the process and review. His expertise significantly helped shape the final outcome.
\bibliography{paper}
\bibliographystyle{icml2024}

\newpage
\appendix
\onecolumn
\section{Appendix}
\subsection{Proof of MLS Matrix Form} \label{app:proof_mls_matrix}
\begin{proof} 
We define the matrix \(W\) as a square matrix with elements \(w_{ij}\) as follows:
\[
W = 
\begin{bmatrix}
w_{11} & w_{12} & \cdots & w_{1n} \\
w_{21} & w_{22} & \cdots & w_{2n} \\
\vdots & \vdots & \ddots & \vdots \\
w_{n1} & w_{n2} & \cdots & w_{nn}
\end{bmatrix}
\]

Next, we define the diagonal matrix \(D\) with the sum of each row of \(W\) on the diagonal, and zeros elsewhere as follows:
\[
D = 
\begin{bmatrix}
\sum_{j} w_{1j} & 0 & \cdots & 0 \\
0 & \sum_{j} w_{2j} & \cdots & 0 \\
\vdots & \vdots & \ddots & \vdots \\
0 & 0 & \cdots & \sum_{j} w_{nj}
\end{bmatrix}
\]
We define the matrix $U$ as the diagonal matrix with elements $s_{ij}$.\\
Now, let's consider the product of \(W\) and \(U\), denoted as \(WU\):
\[
WU
=
\begin{bmatrix}
w_{11}u_{1} & w_{12}u_{2} & \cdots & w_{1n}u_{n} \\
w_{21}u_{1} & w_{22}u_{2} & \cdots & w_{2n}u_{n} \\
\vdots & \vdots & \ddots & \vdots \\
w_{n1}u_{1} & w_{n2}u_{2} & \cdots & w_{nn}u_{n}
\end{bmatrix}
\]

Similarly, we obtain \(UW\).

Finally, \(UD\):
\[
UD =
\begin{bmatrix}
u_{1}\sum_{j} w_{1j} & 0 & \cdots & 0 \\
0 & u_{2}\sum_{j} w_{2j} & \cdots & 0 \\
\vdots & \vdots & \ddots & \vdots \\
0 & 0 & \cdots & u_{n}\sum_{j} w_{nj}
\end{bmatrix}
\]

Now that we have all the elements, we can calculate each element in Eq. \ref{eq_mls_matrix_formulation}:

\begin{align}
f^TWUf= & f^T\begin{bmatrix}
w_{11}u_{1} & w_{12}u_{2} & \cdots & w_{1n}u_{n} \\
w_{21}u_{1} & w_{22}u_{2} & \cdots & w_{2n}u_{n} \\
\vdots & \vdots & \ddots & \vdots \\
w_{n1}u_{1} & w_{n2}u_{2} & \cdots & w_{nn}u_{n}
\end{bmatrix}f=\nonumber \\ & \begin{bmatrix}
\sum_j f_j w_{j1}u_{1} & \sum_j f_j w_{j2}u_{2} & \cdots & \sum_j f_j w_{jn}u_{n}
\end{bmatrix}f = \sum_i \sum_j w_{ij} u_i f_j f_i
\label{eq_matrix_formulation_a}
\end{align}

\begin{align}
f^TUDf= & f^T\begin{bmatrix}
u_{1}\sum_{j} w_{1j} & 0 & \cdots & 0 \\
0 & u_{2}\sum_{j} w_{2j} & \cdots & 0 \\
\vdots & \vdots & \ddots & \vdots \\
0 & 0 & \cdots & u_{n}\sum_{j} w_{nj}
\end{bmatrix}
f=\nonumber \\ & \begin{bmatrix}
f_1 u_{1}\sum_{j} w_{1j} & f_2 u_{2}\sum_{j} w_{2j} & \cdots & f_n u_{n}\sum_{j} w_{nj}
\end{bmatrix}f = \sum_i \sum_j w_{ij} u_i f_i^2
\label{eq_matrix_formulation_b}
\end{align}

\begin{align}
\mathds{1}^TUWf^2= & \mathds{1}^T \begin{bmatrix}
w_{11}u_{1} & w_{12}u_{1} & \cdots & w_{1n}u_{1} \\
w_{21}u_{2} & w_{22}u_{2} & \cdots & w_{2n}u_{2} \\
\vdots & \vdots & \ddots & \vdots \\
w_{n1}u_{n} & w_{n2}u_{n} & \cdots & w_{nn}u_{n}
\end{bmatrix} f^2=\nonumber \\ & \mathds{1}^T \begin{bmatrix}
u_{1}\sum_{j} w_{1j} f_j^2 & u_{2}\sum_{j} w_{2j} f_j^2 & \cdots & u_{n}\sum_{j} w_{nj} f_j^2
\end{bmatrix} = \sum_i \sum_j w_{ij} u_i f_j^2
\label{eq_matrix_formulation_c}
\end{align}

Finally, by combining Eq. \ref{eq_matrix_formulation_a}, \ref{eq_matrix_formulation_b}, and \ref{eq_matrix_formulation_c}, we obtain:

\begin{align}
MLS_r= & \sum_{ij\in \mathcal{M}}\frac{(f_{r_i}-f_{r_j})^2w_{ij}u_{i}}{Var(f_r)}=\sum_{ij\in \mathcal{M}}\frac{(f_{r_i}^2+f_{r_j}^2-2f_{r_j}f_{r_i})^2w_{ij}u_{i}}{Var(f_r)}= \nonumber \\ & \sum_{ij\in \mathcal{M}}\frac{f_{r_i}^2w_{ij}u_{i}}{Var(f_r)}+\sum_{ij\in \mathcal{M}}\frac{f_{r_j}^2w_{ij}u_{i}}{Var(f_r)}-\sum_{ij\in \mathcal{M}}\frac{2f_{r_i}f_{r_j}w_{ij}u_{i}}{Var(f_r)} = \nonumber \\ &
\frac{f^TUDf+\mathds{1}^TUWf^2-2f^TWUf}{Var(f_r)}
\end{align}
\end{proof}

\subsection{Marginal Feature Illustration}
\label{app:marginal_feature_illustration}
\begin{figure}[htbp]
\begin{center}
\centerline{\includegraphics[width=0.5\columnwidth]{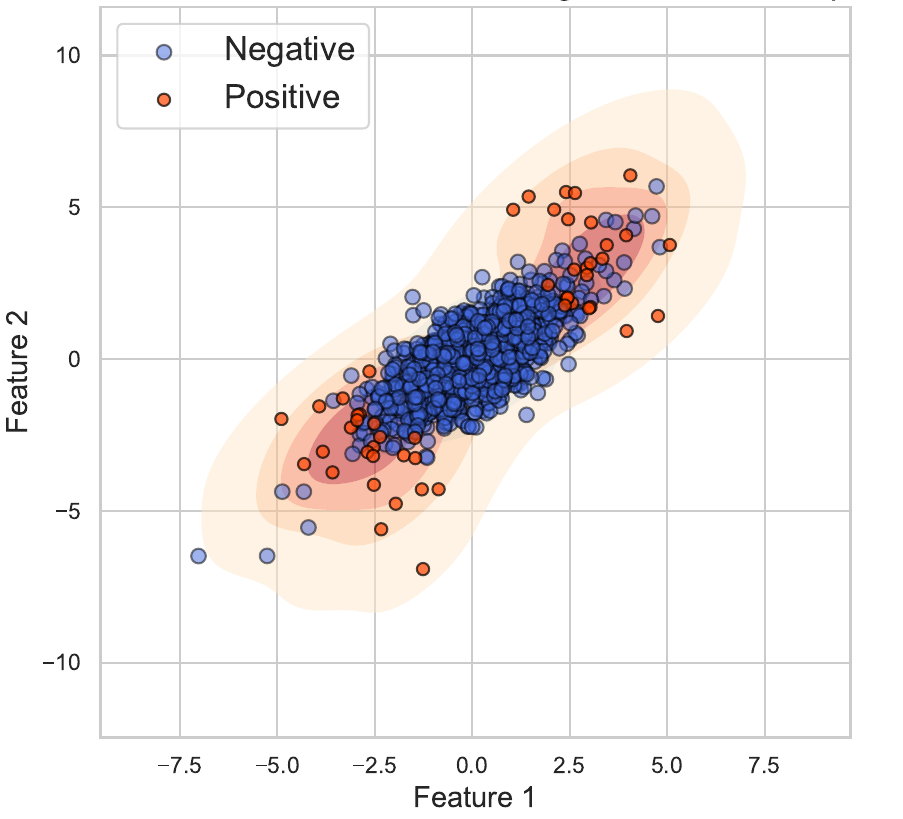}}
\caption{Theoretical illustration of a two-feature distribution with marginal positive samples.}
\label{theortical_kde_plot}
\end{center}
\end{figure}

\newpage 

\subsection{Public Dataset Description}
\begin{table}[h]
\centering
\begin{tabular}{lccc}
\hline
\textbf{Dataset} & \textbf{\# Points} & \textbf{\# Dimensions} & \textbf{\# Outliers (\%)} \\
\hline
Ionosphere & 351 & 33 & 126 (35.90\%) \\
Landsat & 6435 & 36 & 1333 (20.71\%) \\
Lympho & 148 & 18 & 6 (4.05\%) \\
Musk & 3062 & 166 & 97 (3.17\%) \\
Pendigits & 6870 & 16 & 156 (2.27\%) \\
Pima & 768 & 8 & 268 (34.90\%) \\
Satellite & 6435 & 36 & 2036 (31.64\%) \\
Thyroid & 3772 & 6 & 93 (2.47\%) \\
Vertebral & 240 & 6 & 30 (12.50\%) \\
Vowels & 1456 & 12 & 50 (3.43\%) \\
Waveform & 3443 & 21 & 100 (2.90\%) \\
WBC & 278 & 30 & 21 (5.56\%) \\
WDBC & 367 & 30 & 10 (2.72\%) \\
Wine & 129 & 13 & 10 (7.75\%) \\
\hline
\end{tabular}
\caption{Dataset Information}
\label{tab:dataset_info}
\end{table}

\subsection{Individual Public Dataset Scores - Raw Setup}

\begin{table}[htbp]
  \centering
  \begin{tabular}{lcccc}
    \toprule
    Dataset      & MLS      & LS       & DUFS-MLS  & DUFS     \\
    \midrule
    Ionosphere   & \textbf{0.931} & 0.841 & 0.897  & 0.880 \\
    Landsat      & 0.811 & \textbf{0.813} & 0.808  & 0.802 \\
    Lympho       & 0.690 & 0.678 & 0.821  & \textbf{0.870} \\
    Musk         & 0.999 & 0.987 & \textbf{1.000}  & 0.994 \\
    Pendigits    & 0.976 & \textbf{0.986} & 0.972  & 0.968 \\
    Satellite    & \textbf{0.855} & 0.864 & 0.851  & 0.837 \\
    Vowels       & 0.900 & \textbf{0.934} & 0.927  & 0.899 \\
    Waveform     & 0.791 & 0.804 & \textbf{0.827}  & 0.822 \\
    WBC          & \textbf{0.974} & 0.963 & \textbf{0.974}  & 0.952 \\
    WDBC         & 0.999 & \textbf{1.000} & 0.997  & 0.999 \\
    Wine         & \textbf{0.999} & 0.996 & \textbf{0.999}  & 0.973 \\
    \bottomrule
  \end{tabular}
  \caption{Raw 5 feature ROC-AUC scores}
  \label{tab:raw_5}
\end{table}

\begin{table}[htbp]
  \centering
  \begin{tabular}{lcccc}
    \toprule
    Dataset      & MLS      & LS       & DUFS-MLS  & DUFS     \\
    \midrule
    Ionosphere   & \textbf{0.949} & 0.857 & 0.893  & 0.909 \\
    Landsat      & \textbf{0.821} & 0.812 & 0.799  & 0.811 \\
    Lympho       & 0.762 & 0.727 & \textbf{0.953}  & 0.845 \\
    Musk         & 0.996 & 0.994 & 0.998  & \textbf{0.999} \\
    Pendigits    & 0.985 & \textbf{0.987} & 0.983  & 0.986 \\
    Satellite    & 0.856 & \textbf{0.865} & 0.839  & 0.856 \\
    Vowels       & 0.904 & 0.930 & \textbf{0.969}  & 0.913 \\
    Waveform     & 0.829 & 0.829 & 0.863  & \textbf{0.870} \\
    WBC          & \textbf{0.975} & 0.953 & 0.968  & 0.966 \\
    WDBC         & \textbf{1.000} & 0.999 & \textbf{1.000}  & 0.998 \\
    Wine         & 0.997 & 0.987 & \textbf{0.999}  & 0.986 \\
    \bottomrule
  \end{tabular}
  \caption{Raw 7 feature ROC-AUC scores}
  \label{tab:raw_7}
\end{table}

\begin{table}[htbp]
  \centering
  \begin{tabular}{lcccc}
    \toprule
    Dataset      & MLS      & LS       & DUFS-MLS  & DUFS     \\
    \midrule
    Ionosphere   & \textbf{0.953} & 0.859 & 0.925 & 0.913 \\
    Landsat      & \textbf{0.817} & 0.808 & 0.816 & 0.810 \\
    Lympho       & 0.793 & 0.774 & \textbf{0.903} & 0.891 \\
    Musk         & 0.999 & 0.990 & 0.999 & \textbf{1.000} \\
    Pendigits    & 0.991 & 0.993 & \textbf{0.994} & 0.993 \\
    Satellite    & 0.862 & 0.862 & \textbf{0.866} & 0.848 \\
    Vowels       & 0.968 & 0.979 & \textbf{0.990} & 0.989 \\
    Waveform     & 0.887 & 0.884 & \textbf{0.905} & 0.887 \\
    WBC          & \textbf{0.976} & 0.966 & 0.972 & 0.967 \\
    WDBC         & \textbf{1.000} & 0.998 & \textbf{1.000} & \textbf{1.000} \\
    Wine         & 0.995 & 0.999 & 0.998 & \textbf{1.000} \\
    \bottomrule
  \end{tabular}
  \caption{Raw 10 feature ROC-AUC scores}
  \label{tab:raw_10}
\end{table}

\begin{table}[htbp]
  \centering
  \begin{tabular}{lcccc}
    \toprule
    Dataset      & MLS      & LS       & DUFS-MLS  & DUFS     \\
    \midrule
    Ionosphere   & \textbf{0.955} & 0.872 & 0.919 & 0.915 \\
    Landsat      & \textbf{0.822} & 0.811 & 0.819 & 0.810 \\
    Lympho       & 0.909 & 0.790 & \textbf{0.942} & 0.797 \\
    Musk         & 0.997 & 0.995 & \textbf{1.000} & \textbf{1.000} \\
    Pendigits    & 0.996 & \textbf{0.997} & 0.996 & 0.994 \\
    Satellite    & 0.869 & 0.858 & 0.867 & \textbf{0.874} \\
    Vowels       & \textbf{0.997} & \textbf{0.997} & \textbf{0.997} & \textbf{0.997} \\
    Waveform     & 0.898 & \textbf{0.929} & 0.911 & 0.907 \\
    WBC          & 0.976 & 0.967 & 0.970 & \textbf{0.980} \\
    WDBC         & \textbf{1.000} & 0.999 & \textbf{1.000} & \textbf{1.000} \\
    Wine         & 0.996 & \textbf{0.997} & 0.993 & 0.994 \\
    \bottomrule
  \end{tabular}
  \caption{Raw 12 feature ROC-AUC scores}
  \label{tab:raw_12}
\end{table}

\newpage
\subsection{Individual Public Dataset Scores - Noisy Setup}
\begin{table}[htbp]
  \centering
  \begin{tabular}{lcccc}
    \toprule
    Dataset                & MLS      & LS       & DUFS-MLS  & DUFS     \\
    \midrule
    Ionosphere      & \textbf{0.941}    & 0.838    & 0.921    & 0.596    \\
    Landsat         & 0.787    & \textbf{0.817}    & 0.785    & 0.581    \\
    Lympho          & 0.669    & 0.594    & \textbf{0.688}    & 0.541    \\
    Musk            & \textbf{1.000}    & 0.999    & \textbf{1.000}    & 0.898    \\
    Pendigits       & \textbf{0.980}    & 0.465    & 0.922    & 0.558    \\
    Pima            & \textbf{0.715}    & 0.493    & 0.650    & 0.523    \\
    Satellite       & 0.821    & \textbf{0.866}    & 0.805    & 0.566    \\
    Thyroid         & \textbf{0.996}    & 0.990    & 0.993    & 0.517    \\
    Vertebral       & \textbf{0.808}    & 0.519    & 0.739    & 0.546    \\
    Vowels          & \textbf{0.728}    & 0.536    & 0.622    & 0.532    \\
    Waveform        & 0.518    & 0.534    & \textbf{0.845}    & 0.523    \\
    WBC             & 0.963    & \textbf{0.970}    & 0.963    & 0.503    \\
    WDBC            & 0.998    & \textbf{1.000}    & \textbf{1.000}    & 0.449    \\
    Wine            & \textbf{0.974}    & 0.545    & 0.747    & 0.730    \\
    \bottomrule
  \end{tabular}
  \caption{Noisy 5 feature ROC-AUC scores}
  \label{tab:noisy_5}
\end{table}

\begin{table}[htbp]
  \centering
  \begin{tabular}{lcccc}
    \toprule
    Dataset                & MLS      & LS       & DUFS-MLS  & DUFS     \\
    \midrule
    Ionosphere      & \textbf{0.924}    & 0.846    & 0.907    & 0.651    \\
    Landsat         & 0.787    & \textbf{0.811}    & 0.795    & 0.553    \\
    Lympho          & \textbf{0.701}    & 0.666    & 0.676    & 0.524    \\
    Musk            & 0.980    & 0.999    & \textbf{1.000}    & 0.982    \\
    Pendigits       & \textbf{0.987}    & 0.462    & 0.869    & 0.597    \\
    Pima            & \textbf{0.721}    & 0.490    & 0.679    & 0.513    \\
    Satellite       & 0.829    & \textbf{0.862}    & 0.808    & 0.671    \\
    Thyroid         & \textbf{0.998}    & 0.983    & 0.979    & 0.567    \\
    Vertebral       & \textbf{0.815}    & 0.606    & 0.717    & 0.507    \\
    Vowels          & \textbf{0.781}    & 0.535    & 0.616    & 0.493    \\
    Waveform        & 0.537    & 0.514    & \textbf{0.810}    & 0.586    \\
    WBC             & \textbf{0.971}    & 0.970    & 0.968    & 0.721    \\
    WDBC            & 0.999    & \textbf{1.000}    & 0.995    & 0.842    \\
    Wine            & \textbf{0.931}    & 0.419    & 0.671    & 0.606    \\
    \bottomrule
  \end{tabular}
  \caption{Noisy 7 feature ROC-AUC scores}
  \label{tab:noisy_7}
\end{table}

\begin{table}[htbp]
  \centering
  \begin{tabular}{lcccc}
    \toprule
    Dataset                & MLS      & LS       & DUFS-MLS  & DUFS     \\
    \midrule
    Ionosphere      & 0.885    & 0.870    & \textbf{0.935}    & 0.651    \\
    Landsat         & 0.815    & \textbf{0.819}    & 0.811    & 0.553    \\
    Lympho          & \textbf{0.706}    & 0.660    & 0.649    & 0.524    \\
    Musk            & \textbf{1.000}    & 0.999    & \textbf{1.000}    & 0.982    \\
    Pendigits       & \textbf{0.987}    & 0.444    & 0.972    & 0.597    \\
    Pima            & 0.736    & 0.492    & \textbf{0.748}    & 0.513    \\
    Satellite       & 0.860    & \textbf{0.870}    & 0.856    & 0.671    \\
    Thyroid         & \textbf{0.998}    & 0.988    & 0.995    & 0.567    \\
    Vertebral       & \textbf{0.806}    & 0.591    & 0.741    & 0.507    \\
    Vowels          & \textbf{0.762}    & 0.554    & 0.699    & 0.493    \\
    Waveform        & 0.520    & 0.512    & \textbf{0.815}    & 0.586    \\
    WBC             & 0.972    & \textbf{0.975}    & 0.971    & 0.722    \\
    WDBC            & \textbf{1.000}    & \textbf{1.000}    & \textbf{1.000}    & 0.842    \\
    Wine            & \textbf{0.959}    & 0.518    & 0.946    & 0.606    \\
    \bottomrule
  \end{tabular}
  \caption{Noisy 10 feature ROC-AUC scores}
  \label{tab:noisy_10}
\end{table}

\begin{table}[htbp]
  \centering
  \begin{tabular}{lcccc}
    \toprule
    Dataset                & MLS      & LS       & DUFS-MLS  & DUFS     \\
    \midrule
    Ionosphere      & 0.901    & 0.902    & \textbf{0.940}    & 0.631    \\
    Landsat         & 0.822    & 0.815    & \textbf{0.825}    & 0.686    \\
    Lympho          & \textbf{0.752}    & 0.589    & 0.692    & 0.505    \\
    Musk            & \textbf{1.000}    & 0.999    & \textbf{1.000}    & 0.998    \\
    Pendigits       & \textbf{0.990}    & 0.970    & 0.936    & 0.684    \\
    Pima            & \textbf{0.729}    & 0.618    & 0.711    & 0.572    \\
    Satellite       & 0.866    & \textbf{0.870}    & 0.861    & 0.673    \\
    Thyroid         & \textbf{0.997}    & 0.995    & 0.993    & 0.541    \\
    Vertebral       & \textbf{0.829}    & 0.816    & 0.801    & 0.565    \\
    Vowels          & 0.802    & \textbf{0.849}    & 0.732    & 0.532    \\
    Waveform        & 0.542    & 0.677    & \textbf{0.838}    & 0.607    \\
    WBC             & 0.967    & \textbf{0.973}    & \textbf{0.973}    & 0.751    \\
    WDBC            & \textbf{1.000}    & \textbf{1.000}    & \textbf{1.000}    & 0.859    \\
    Wine            & \textbf{0.931}    & 0.806    & 0.816    & 0.556    \\
    \bottomrule
  \end{tabular}
  \caption{Noisy 12 feature ROC-AUC scores}
  \label{tab:noisy_12}
\end{table}
\subsection{Disclaimer}
https://edc.intel.com/content/www/us/en/products/performance/benchmarks/overview/

\end{document}